\newcommand{\csp}{\textsf{CSP}}
\newcommand{\CSP}{\emph{CSP}}
\newtheorem*{mydef}{Definition}
\newtheorem{lemma}{Lemma}
\begin{document}

%
\title{The Exact Closest String Problem as a \\Constraint Satisfaction Problem}

%
\titlerunning{The {\csp} as a {\CSP}}


%
\author{Tom Kelsey\thanks{To whom all correspondence should be addressed.}\\
University of St Andrews\\
St Andrews, KY16 9SX United Kingdom\\
\url{tom@cs.st-andrews.ac.uk}\\
\and
Lars Kotthof\/f\\
University of St Andrews\\
St Andrews, KY16 9SX United Kingdom\\
\url{larsko@cs.st-andrews.ac.uk}\\
 }

%
\authorrunning{Kelsey, Kotthof\/f}

\maketitle

%
\begin{abstract}
 We report (to our knowledge) the first evaluation of Constraint Satisfaction as a computational framework for solving closest string problems. We show that careful consideration of symbol occurrences can provide search heuristics that provide several orders of magnitude speedup at and above the optimal distance.  
We also report (to our knowledge) the first analysis and evaluation -- using any technique -- of the computational difficulties involved in the identification of all closest strings for a given input set.  
 We describe algorithms for web-scale distributed solution of closest string problems, both purely based on AI backtrack search and also hybrid numeric-AI methods.
\end{abstract}

\section{Introduction}
\label{sect:introduction}

The closest string problem ({\csp}) takes as input a set of strings of equal length over a fixed alphabet. A solution is a string with the smallest possible maximum Hamming distance from any input string. (Strictly speaking, distance with respect to any suitable metric can be minimised; the Hamming distance is the standard edit distance metric used for this class of problems.) {\csp} has applications in coding and information theories  (in these fields the problem is also known as minimum radius), but when the input strings consist of nucleotide sequences over the letters A, C, G and T,  (or of  mRNA sequences over the letters A, C, G and U, or of  amino acid sequences over an alphabet of size 20) the {\csp} has important applications in computational biology (where the problem class is also known as centre string). 
Examples include the identification of consensus patterns in a set of unaligned DNA sequences known to bind a common protein \cite{Hertz1990}, finding conserved secondary structure motifs in unaligned RNA sequences \cite {Pavesi2004}, discovering motifs in ranked lists of DNA sequences \cite{Eden2007}, finding DNA regulatory motifs within unaligned noncoding sequences \cite{Roth1998}, the identification of sister chromatids by DNA template strand sequences \cite{Falconer2010}, and DNA motif representation with nucleotide dependency \cite{Chin2008}.
Our aim is to provide theoretical and practical results -- together with empirical supporting evidence  -- that lead to improved  {\csp} solution for biological problems, so in this paper the base alphabet $\Sigma$ will always consist of four symbols. 

A Constraint Satisfaction Problems ({\CSP})  consists of a set of constraints involving variables taking discrete values. A solution to a {\CSP} is an assignment of values to variables such that no constraint is violated. {\CSP} solvers are used for many important classes of problems for which solutions must take discrete values, but,
to our knowledge, the closest string problem has not been modelled and solved as a {\CSP}. The research question under consideration, therefore, is ``Is   {\CSP}  a useful framework for solving {\csp} instances?''

In this paper we investigate approaches to developing and solving such models. We demonstrate that a careful choice of search heuristic can give several orders of magnitude speedup in general. We show that {\CSP} modelling and solution are effective tools for the related problem of obtaining {\bf all} closest strings. We consider the distribution of closest string problems across a cloud (or grid, or cluster) of computing nodes, and identify two potential super-linear speedups that can be achieved in practice. Finally we identify the strengths and weaknesses of existing numeric approaches, and suggest hybrid discrete and numeric methods that combine the best features of {\CSP} search and numeric search for solutions. 

In the rest of this introduction we discuss existing methods for the {\csp}  with respect to theoretical complexity results,  give brief overviews of Constraint Satisfaction theory and the Minion {\CSP} solver, and formally define the theoretical concepts upon which the research is based.  In Section \ref{CSPasCSP} we model closest string problems as {\CSP}s,  compare search heuristics, and provide results for the all closest string problem. We describe distributed algorithms in Section \ref{cloud}, both for pure {\CSP} models and heuristics, and for hybrid {\CSP}-numeric methods. In Section \ref{discussion} we discuss the relative strengths and limitations of {\CSP} as a framework for closest string identification, and identify future avenues of research.

\subsection{Computational Complexity and Existing Methods}
\label{complexity}

{\csp} has been shown to be NP-complete for binary strings \cite{Frances1997}
and for alphabets of arbitrary size \cite{Lanctot2004}.  Intuitively there are
$|\Sigma|$ choices for each of the $L$ positions in any candidate closest
string where $\Sigma$ is the alphabet, and for any algorithm that fails to check
each of this exponential number of cases one could devise a {\csp} for which the
algorithm returns an incorrect result. 
 
 Approximate solutions to within $(4/3 + \epsilon)$ of the minimal $d$ can be obtained in polynomial time \cite{Lanctot2004,Li2002}, with several practically useful implementations available, notably those based on genetic algorithms \cite{Julstrom2009}.  However, in this paper we are concerned with first finding exact solutions, and then (given that we know the minimal distance $d$)  finding all closest strings that are within $d$ of $S$. Clearly, an approximate method will not, in general, identify the minimal $d$, and therefore can not be used as a basis for finding all solutions. 
 
 Excellent exact results -- provided that close bounds have already been identified -- have been obtained by modelling the {\csp} as an Integer Programming Problem \cite{Meneses2004}, and solving the resulting instances using numerical branch and bound methods \cite{Land1960}.  This form of search differs from the backtrack search used by {\CSP} solvers by having a much less organised search pattern. This is often advantageous, but can be a hindrance when searching for all solutions: IP branch and bound is optimised for optimisation, as it were, rather than exhaustive search for all candidates for a constant objective function. If the IP formulation suggested in \cite{Meneses2004} is used, then  the feasible region deliberately excludes optimal solutions in order to reduce the numbers of variables, in which case no search for all solutions can be made. 
 
 A linear time algorithm exists for solutions to the {\csp} for fixed distance $d$ \cite{Gramm2001}. The exponential complexity is now in the coefficient, as the method is $O(NL + Ndd^d)$ where the problem has $N$ strings of length $L$.  


\subsection{Constraint Satisfaction Problems}
\label{csp}

 \begin{mydef}
 \label{def:csp}
A Constraint Satisfaction Problem  $\Upsilon$ is a set of constraints $\mathcal{C}$ acting on a finite set of
variables $\Delta:= \{ A_1, A_2, \ldots, A_n\}$, each of which has a finite 
domain of possible values $D_i:= D(A_i) \subseteq \Lambda$. A
\emph{solution} to $\Upsilon$ is an instantiation of all of the variables
in $\Delta$ such that no constraint in $\mathcal{C}$ is violated.
\end{mydef}

The class of {\CSP}s is NP-complete as it is a generalisation of propositional satisfiability (SAT).  The Handbook of  Constraint Programming~\cite{CSP} provides full details of {\CSP} theory and techniques.  A key observation is that different models (i.e. choices of variables, values and constraints) for the same problem (or class of problems) will often give markedly different results when the instances are solved, but, as with numeric Linear, Mixed-Integer and Quadratic Programming, there is no general way to decide in advance which candidate models and heuristics will lead to faster search. 

A typical solver  operates by building a search tree in which the nodes are assignments of values to variables, and the edges lead to assignment choices for the next variable. If  a constraint is violated at any node, then search backtracks. If a leaf is reached, then all constraints are satisfied, and the full set of assignments  provides a solution. These search trees are obviously exponential, and in the worst-case scenario every node may have to constructed. However, large-scale pruning of the search tree can occur by judicious use of consistency methods. The idea is to do a small amount of extra work that (hopefully) identifies variable-value assignments that are already logically ruled out by the current choice of assignment, meaning that those branches of the search tree need not be explored. While there are no guarantees that this extra work is anything other than an overhead, in practice enough search is pruned to give efficient solutions for otherwise intractable problems. Taking a specific  example from the empirical evaluation reported later in this paper, an all closest string problem for a fixed distance involving strings of length 25 with a 4-symbol alphabet will require at most $4^{25}  \approx 1.1 \times 10^{15}$ nodes to be searched. An efficient solver will search only 3 or 4 $\times 10^9$ nodes, with the remainder being ruled out by efficient propagation of the logical results of the assignments during search. Moreover, an efficient solver will search around 300,000 of the remaining nodes per CPU second. It is this efficient reduction in search space that allows {\CSP} practitioners to solve otherwise intractable problems.

Heuristics exist for choices of variable-value pair for the next node, and as before these may have no effect on the number of nodes visited.  Again, in general, variable and value orderings designed for specific problem classes can lead to several orders of magnitude reduction in the number of nodes needed to find a solution. Standard choices for variable orderings include random, smallest domain, largest domain, most-constrained (i.e. chooses avariable that appears in a maximal number of constraints), least-constrained, etc.  Results will vary with the problem class and model under consideration. Taking another specific example from experiments in this paper involving closest string problems, enforcing singleton singleton arc consistency -- a limited depth procedure that aims to prune entire branches near the root, see \cite{Bessiere2008} for a full analysis -- at the root node of a search tree  can be a huge loss. It can take three times longer to reach the first closest string at a given distance than it takes to find  all closest strings.

In summary, the solution performance for instances of a class of  {\CSP}s will depend crucially on choices of model, consistency, search order and the solver used.  Moreover, empirical evaluation is often the only way to decide which of these choices is better for a given set of circumstances.

\subsubsection{The Minion {\CSP} solver}
\label{minion}

 The constraint solver Minion~\cite{Minion}  uses the memory architecture of modern computers to speed up the backtrack process compared to other solvers.  Minion has an extensive set of constraints, together with efficient propagators that enforce consistency levels very rapidly.  Minion has been used to solve open problems in combinatoric algebra \cite{Distler2009}, finding billions of solutions in a search space of size $10^{100}$ in a matter of hours.   Minion is used as the solver for this investigation as it offers both fast and scalable constraint solving,  which are important factors when solving closest string problems. Moreover, the user can easily specify bespoke variable orderings, and less easily specify value orderings. 

\subsection{Formal Definitions and Results}
\label{concepts}

Before proceeding to the technical Sections, we first formalise Hamming Distances and Diameters, and closest strings:

\begin{mydef}
Let $S_1$ and $S_2$ be strings of length $L$ over an alphabet $\Sigma$. Let $D$ be the binary string of length $L$ such that
$$
D(i)= \left\{
\begin{array}{c l}
  1 &  S_1(i) \neq S_2(i)\\
  0 & \mbox{otherwise}
\end{array}
\right.
$$
The \emph{Hamming Distance}  $hd(S_1,S_2)$ is defined as the sum from $i=1$ to $L$ of the $ D(i)$. 
\end{mydef}

\begin{mydef}
Let $S= \{S_1,S_2,\ldots,S_N\}$ be a set of strings of length $L$ over an alphabet $\Sigma$.  
A \emph{Closest String}  to $S$ is defined as any string $CS$ of length $L$ over  $\Sigma$ such that
$$
hd(CS,S_i)  \leq d \quad \forall i \in \{1,2,\ldots,N\}
$$
with $d$ being the minimal such distance for $S$. The \emph{Hamming diameter} $HD$ of $S$ is defined as
$$
HD(S) = \max(hd(S_i,S_j)) \quad \forall i,j \in \{1,2,\ldots,N\}.
$$
\end{mydef}

A solution to a closest string problem involving the strings in $S$ is therefore a string $CS$ and a minimal distance $d$ such that each member of $S$ is within $d$ of $CS$. The Hamming distance is an edit distance that quantifies the number of substitutions from $\Sigma$ required to turn one string into another. It is easy to show that Hamming distance is a metric, satisfying the triangle inequality.  It is clear that the Hamming Diameter is an upper bound for the distance of a closest string: a candidate closest string at a greater distance can be replaced by any member of $S$, reducing the maximal distance to $HD(S)$.  We can obtain a lower bound for the distance of a closest string by observing that the distance can not be less than half the Hamming Diameter:

\begin{lemma}
\label{lemma1}
Let $S= \{S_1,S_2,\ldots,S_N\}$ be a set of strings of length $L$ over an alphabet $\Sigma$. A closest string $CS$ to $S$ must be within $\lceil HD(S)/2 \rceil$ of $S$.
\end{lemma}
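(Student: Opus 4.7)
The plan is to verify the claimed upper bound constructively: exhibit a specific string $CS$ whose Hamming distance to every member of $S$ is at most $\lceil HD(S)/2 \rceil$, which certifies that the minimum such distance $d$ cannot exceed this value.

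First, I would identify a diameter-realising pair $S_a, S_b \in S$ with $hd(S_a, S_b) = HD(S)$. Partition the $L$ positions into the set $E$ where $S_a$ and $S_b$ agree and the set $D$ of size $HD(S)$ where they disagree. My candidate $CS$ takes $S_a(i) = S_b(i)$ on $E$, and on $D$ agrees with $S_a$ on $\lceil HD(S)/2 \rceil$ of the positions and with $S_b$ on the remaining $\lfloor HD(S)/2 \rfloor$. By construction $hd(CS, S_a) = \lfloor HD(S)/2 \rfloor$ and $hd(CS, S_b) = \lceil HD(S)/2 \rceil$, so the bound is met against the two extremal strings.

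Second, I would extend the argument to an arbitrary $S_j \in S$. For each position $i$ I would case-split on whether $S_j(i)$ equals $S_a(i)$, $S_b(i)$, or neither, together with which of $S_a(i),S_b(i)$ was chosen for $CS(i)$. Using $hd(S_a, S_j) \leq HD(S)$ and $hd(S_b, S_j) \leq HD(S)$ as ``budget'' constraints, and the observation that every disagreement between $S_j$ and $CS$ on $D$ is also a disagreement with at least one of $S_a, S_b$, the aim is an averaging or double-counting step that collapses the two budget bounds into $hd(CS, S_j) \leq \lceil HD(S)/2 \rceil$.

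The main obstacle will be precisely this last step. A direct triangle inequality through $S_a$ only yields $hd(CS, S_j) \leq \lfloor HD(S)/2 \rfloor + HD(S)$, which is far too weak. I therefore expect the proof to hinge on a careful, position-wise refinement of the midpoint construction: either replacing the arbitrary split on $D$ by one chosen to minimise the worst third-party distance, or abandoning the pair-based construction in favour of choosing each $CS(i)$ by plurality among $\{S_j(i)\}_{j=1}^N$ and then charging each per-position disagreement to a pairwise contribution bounded by $HD(S)$. Given that such position-wise plurality arguments can fail when many inputs conflict, I anticipate needing a hybrid of the midpoint idea on $D$ with a plurality choice elsewhere, combined with a tight counting over pairs $(i,j)$ of positions and input strings.
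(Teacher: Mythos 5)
You have proved the wrong inequality. Despite the lemma's loose phrasing (``must be within''), it is the paper's \emph{lower} bound on the optimal distance: the sentence introducing it reads ``the distance can not be less than half the Hamming Diameter,'' and it is later invoked to set the lower bound $d_{min}$ of the domain of the distance variable. The claim is that \emph{no} string $S_k$ of length $L$ can have maximum Hamming distance less than $\lceil HD(S)/2 \rceil$ to $S$, and the paper's proof is two lines: take a diameter-realising pair $S_i, S_j$ and any candidate $S_k$; the triangle inequality gives $hd(S_i,S_k) + hd(S_j,S_k) \geq HD(S)$, so at least one of the two distances is at least $\lceil HD(S)/2 \rceil$. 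Your proposal instead attempts the converse, an \emph{upper} bound $d \leq \lceil HD(S)/2 \rceil$, via a midpoint construction on a diameter pair extended to third strings by a counting argument.

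That upper bound is false for general $N$, which is why your ``main obstacle'' paragraph could not be resolved: no split of the disagreement set $D$, no plurality rule, and no double-counting can close the gap, because the target inequality does not hold. Concretely, take $S = \{000, 011, 101, 110\}$ (over the binary alphabet, or embedded in the paper's 4-symbol alphabet). All pairwise distances are $2$, so $\lceil HD(S)/2 \rceil = 1$; but at each of the three positions the input symbols split two-against-two, so any candidate string $x$ disagrees with at least two inputs per position, giving $\sum_j hd(x,S_j) \geq 6$ and hence $\max_j hd(x,S_j) \geq 2$. The optimal distance is therefore $2 > \lceil HD(S)/2 \rceil$. Your midpoint step is sound for $N=2$, and your suspicion that ``position-wise plurality arguments can fail when many inputs conflict'' was exactly the fatal point -- the radius of a set can strictly exceed half its diameter in Hamming space. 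Note also that even read on its own terms the proposal is incomplete: the decisive second step is stated as an aim (``an averaging or double-counting step'') rather than carried out. Rereading the lemma as a lower bound reduces the whole task to the triangle-inequality argument above.
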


\begin{proof}
Let $S_i$ and $S_j$ be two strings from $S$ for which the Hamming Diameter is achieved, and let $S_k$ be any other string of length $L$ over $\Sigma$. By the triangle inequality $H(S) = hd(S_i,S_j) \leq hd(S_i,S_k) + hd(S_j,S_k)$.   If (without loss of generality)  $hd(S_i,S_k) < \lceil HD(S)/2 \rceil$ then $hd(S_j,S_k) \geq \lceil HD(S)/2 \rceil$.  Hence any distance less than $\lceil HD(S)/2 \rceil$ can not be a maximal distance from $S_k$ to any string in $S$.
 \end{proof}
 
 Search space reduction can be achieved by noting that any value not appearing in position $j$ of any of the strings in $S$ need not appear in a closest string solution.  It should be noted that this only applies when searching for the first optimal solution. When searching for all solutions, any symbol from $\Sigma$ can, in principle, appear at any position in $CS$. 
 
\begin{lemma}
\label{lemma2}
Let $S= \{S_1,S_2,\ldots,S_N\}$ be defined as in Lemma \ref{lemma1}. Let $\Sigma_j$ for $j \in 1,2,\ldots,L$ denote the subset of $\Sigma$ obtained by selecting every symbol that appears in position $j$ of a string in $S$.  Then any symbol in position $j$ of  a closest string to $S$ must also be in $\Sigma_j$.
\end{lemma}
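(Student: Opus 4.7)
The plan is to attack Lemma \ref{lemma2} by contradiction using a local swap. Suppose some closest string $CS$ at the optimal distance $d$ satisfies $CS(j) = c$ for some position $j$ and some symbol $c \notin \Sigma_j$. First I would record the immediate consequence of $c \notin \Sigma_j$: no input string $S_i$ has $S_i(j) = c$, so column $j$ contributes exactly one unit to every Hamming distance $hd(CS, S_i)$. Choosing $c$ at column $j$ is in this sense ``universally wasteful.''

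Second, I would fix any $c' \in \Sigma_j$ (the set is nonempty because column $j$ of $S$ contains at least one symbol) and form $CS'$ by overwriting $CS(j) = c$ with $c'$, leaving the other $L-1$ positions alone. A position-by-position count then gives $hd(CS', S_i) = hd(CS, S_i) - 1$ when $S_i(j) = c'$ and $hd(CS', S_i) = hd(CS, S_i)$ otherwise. Consequently $\max_i hd(CS', S_i) \leq d$, so $CS'$ is at least as good as $CS$.

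The third and decisive step is to sharpen this to a strict inequality $\max_i hd(CS', S_i) < d$, which would contradict the minimality of $d$ and therefore rule out the existence of any closest string with an out-of-column symbol, giving exactly the universal claim as stated. Strict improvement requires $c'$ to be the column-$j$ symbol of every $S_i$ currently attaining $hd(CS, S_i) = d$. I expect this to be the main obstacle. If all of the current maximum-distance strings happen to agree in column $j$, one takes $c'$ to be that common symbol and finishes. Otherwise, two maximum-distance inputs $S_p$ and $S_q$ disagree in column $j$, and no single-position replacement can lower both $hd(CS, S_p)$ and $hd(CS, S_q)$ simultaneously. To push the argument through in full generality I would try either (i) to use the minimality of $d$ to derive a further contradiction from the coexistence of such $S_p$ and $S_q$, thereby eliminating the disagreement case, or (ii) to combine the column-$j$ swap with a compensating edit at another position in which all currently maximum-distance strings already coincide, so that the composite edit strictly lowers the maximum. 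This case analysis is where I expect the proof to be delicate, and it is the step on which the strength of the lemma (universal over all closest strings, rather than a mere WLOG statement) ultimately rests.
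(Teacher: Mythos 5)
Your first two steps are exactly the paper's proof: swap the out-of-column symbol $c$ for some $c'\in\Sigma_j$, observe that $hd(CS',S_i)$ strictly decreases for the strings with $c'$ in position $j$ and is unchanged for all others, hence $\max_i hd(CS',S_i)\leq d$ and $CS'$ is still optimal. The paper stops there. Your third step --- sharpening to a strict inequality to contradict minimality and thereby prove the universal claim for \emph{every} closest string --- cannot be pushed through, and your instinct that this is the delicate point is correct for a stark reason: the universal claim is false. Take $S=\{AAA,\,CCC\}$ over $\Sigma=\{A,C,G,T\}$; the optimal distance is $d=2$, and $GAC$ is a closest string ($hd(GAC,AAA)=hd(GAC,CCC)=2$) with $G\notin\Sigma_1=\{A,C\}$. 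Neither of your proposed rescues works: in (i) the coexistence of disagreeing maximum-distance strings is perfectly consistent with minimality of $d$ (as the example shows), and in (ii) a compensating edit elsewhere can only trade one unit of distance for another, never forcing a strict drop in the maximum when two maximum-distance inputs disagree everywhere relevant.

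What the swap argument genuinely proves is the weaker, existential (``WLOG'') statement: from any closest string one can produce a closest string all of whose symbols are drawn from their columns, so \emph{some} closest string lies in the restricted space and it is safe to prune the domains when searching for a \emph{single} optimal solution. That is precisely how the paper uses the lemma, and the paper itself concedes the universal reading fails: it remarks that the restriction ``can not apply when searching for all solutions,'' and its all-closest-strings experiments (Figure \ref{searchall}) find up to 22\% additional closest strings once the excluded values are restored --- each of those is a counterexample to the lemma as literally stated. So the correct resolution is not a cleverer case analysis at your step three, but stopping at step two and weakening the conclusion (or, equivalently, restating the lemma existentially); the defect you ran into is in the statement, not in your execution of the swap.
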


\begin{proof}
Suppose symbol $s$ in $\Sigma \setminus \Sigma_j$ appears in position $j$ of a solution $CS$. Let $CS^*$ be the string consisting of $CS$ with $s$ replaced by a symbol from $\Sigma_j$ at position $j$. Then $CS^*$ is strictly closer to those strings in $S$ with that symbol at that position, and distance to all other strings is unchanged. Hence if the current $d$ is optimal for $CS$, it remains optimal for $CS^*$.
 \end{proof} 
 
 The final definition needed for this investigation encapsulates  frequencies of symbol appearances per string position, and will be used in Section \ref{pwm} to direct backtrack search for closest strings.

\begin{mydef}
Let $S= \{S_1,S_2,\ldots,S_N\}$ be a set of strings of length $L$ over an alphabet $\Sigma$.  
A \emph{ Position Weight Matrix (PWM)}  for $S$  is an $|\Sigma| \times L$ matrix with entries
$PMS_S(i,j)$ defined as the frequency of symbol $i$ appearing at position $j$  in $S$.
\end{mydef} 

\noindent An example Position Weight Matrix is given in Figure \ref{PWMfig}.
 
\begin{figure}[htb!]
	\begin{centering}
	\scalebox{0.6}{\includegraphics{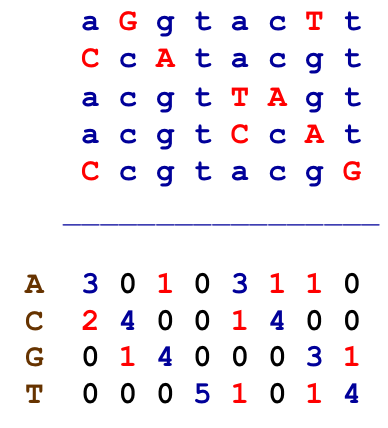}}
	\caption{Five strings of length 8 are shown above, with their PWM shown below.}
	\label{PWMfig}
	\end{centering}
\end{figure}


\section{Closest String as a Constraint Satisfaction Problem}
\label{CSPasCSP}

Using the terminology from Sections \ref{csp} and \ref{concepts}, we now construct a {\CSP} instance from
a given closest string problem.  For the purposes of this paper, the alphabet $\Sigma$ consists of the numbers 1, 2 ,3 and 4, representing A, C, G and T respectively. Clearly this artificial restriction can easily be relaxed in order to model arbitrary alphabets. 

Given $S$, a set of $N$ strings of length $L$ over alphabet $\Sigma$, we first compute the Hamming Diameter $HD(S)$ and use this to provide a lower bound, $d_{min}$, for the optimal distance $d$, as shown in Lemma \ref{lemma1}.
 $\Upsilon(S,d_{min},HD(S))$ denotes the {\CSP} instance in which  the set of variables is $\Delta:= \Delta_1 \cup \Delta_2 \cup \Delta_3 \cup \Delta_4$, where
 \begin{enumerate}
\item $\Delta_1$ is the array $[CS_1,CS_2,\ldots,CS_L]$ of variables representing the closest string, each such variable having domain 1 through 4
\item $\Delta_2$ is an $N \times L$ array of binary variables used to calculate Hamming Distances from $\Delta_1$ to the input strings $S$
\item $\Delta_3$ is the array  $[D_1,D_2,\ldots,D_N]$ of variables representing the distance of each string in $S$ to the current $CS$ candidate, each such variable having domain $d_{min}$ through  $HD(S)$
\item  $\Delta_4$  is the single distance variable $d$ with domain  $d_{min}$ through  $HD(S)$.
\end{enumerate}
 
\noindent The constraints are:
\begin{enumerate}
\item  $\Delta_2(i,j) = 0$ iff $S_i(j)= \Delta_1(j)$ 
\item   $\Delta_3(k)$ is the sum of row $k$ of $\Delta_2$
\item   $\Delta_4$ is the maximum value appearing in $\Delta_3$
\item   $\Delta_4$ is minimised: if a solution is found with $\Delta_4 = d$, search for another solution with $\Delta_4 = d-1$ (unless $d = d_{min}$).
\end{enumerate}

\noindent $\Delta_1$ are the search variables: nodes of the search tree consist of values assigned to these variables. $\Delta_4$ is the objective function (or cost function).  A returned solution is $\Delta_1 \cup \Delta_4$, a closest string together with the optimal distance.  Solving $\Upsilon(S,d_{min},HD(S))$ is guaranteed to return a solution, although it is not impossible that all $4^L$ nodes are visited for every current minimal $d$. Restricting the domains of  $\Delta_3$ will save computational effort when a solution is found with $d = d_{min}$  and will have no effect otherwise. Restricting the domains of the $\Delta_1$ variables in line with lemma \ref{lemma2} also reduces the search space, although the restrictions can not apply when searching for all solutions. 

To find all closest strings $\Upsilon(S,d_{min},HD(S))$ is solved to obtain $CS$ and $d_{opt}$. By restricting the domains of $\Delta_3$ to   $d_{min}$ through $d_{opt}$ and removing the optimisation constraint we obtain a new {\CSP}  $\Upsilon^*(S,d_{min},d)$ which can be solved for all solutions.  The search undertaken to find the first solution $CS$ need not be repeated: constraints can be added that rule out those parts of the search tree already processed. It should also be noted that $CS$ and $d$ need not be obtained using the Constraint Satisfaction approach: any method that returns an optimal solution can be used to create an all closest strings  {\CSP}.

 %
\subsection{Position Weight Matrix Variable and Value Ordering}
\label{pwm}

We now use results from computational biology to devise a bespoke variable and
value ordering schema for $\Upsilon(S,d_{min})$. By precalculating a Position
Weight Matrix for $S$ as defined in Section \ref{concepts} we can order the
search variables by maximum frequency. For each variable, we order the values
assigned during search by decreasing relative frequency.  Tie breaks are either
random or by least index.  In the example given as Figure \ref{PWMfig} the
variable ordering by position 1 through 8 would be 1: position 4 (having 5
occurrences), 2 -- 5: positions 2, 3, 6, and 8 in any order (each having 4
occurrences), 6--8: positions 1 and 7 in any order (having the least highest
frequency of 3).  The value ordering for position 5 in the figure would be 1: A
(most frequent), 2--3: T and C in any order, 4: G (least frequent).  By Lemma \ref{lemma2}, when seeking a single solution it is safe to exclude values
that don't appear at a given position from their respective variable domains
before search. Hence in for the example in Figure \ref{PWMfig} the value ordering would be values typeset in blue followed by values typeset in red, with black values excluded. 

The idea behind this search heuristic is that search starts close to (in the sense of maximum likelihood) an optimal solution. Only if no such solution is found does search progress to less likely (but not impossible)  parts of the search tree .

\subsection{Comparison of Search Heuristics}
\label{compare}

\begin{figure}
\begin{center}
\includegraphics[width=.8\textwidth]{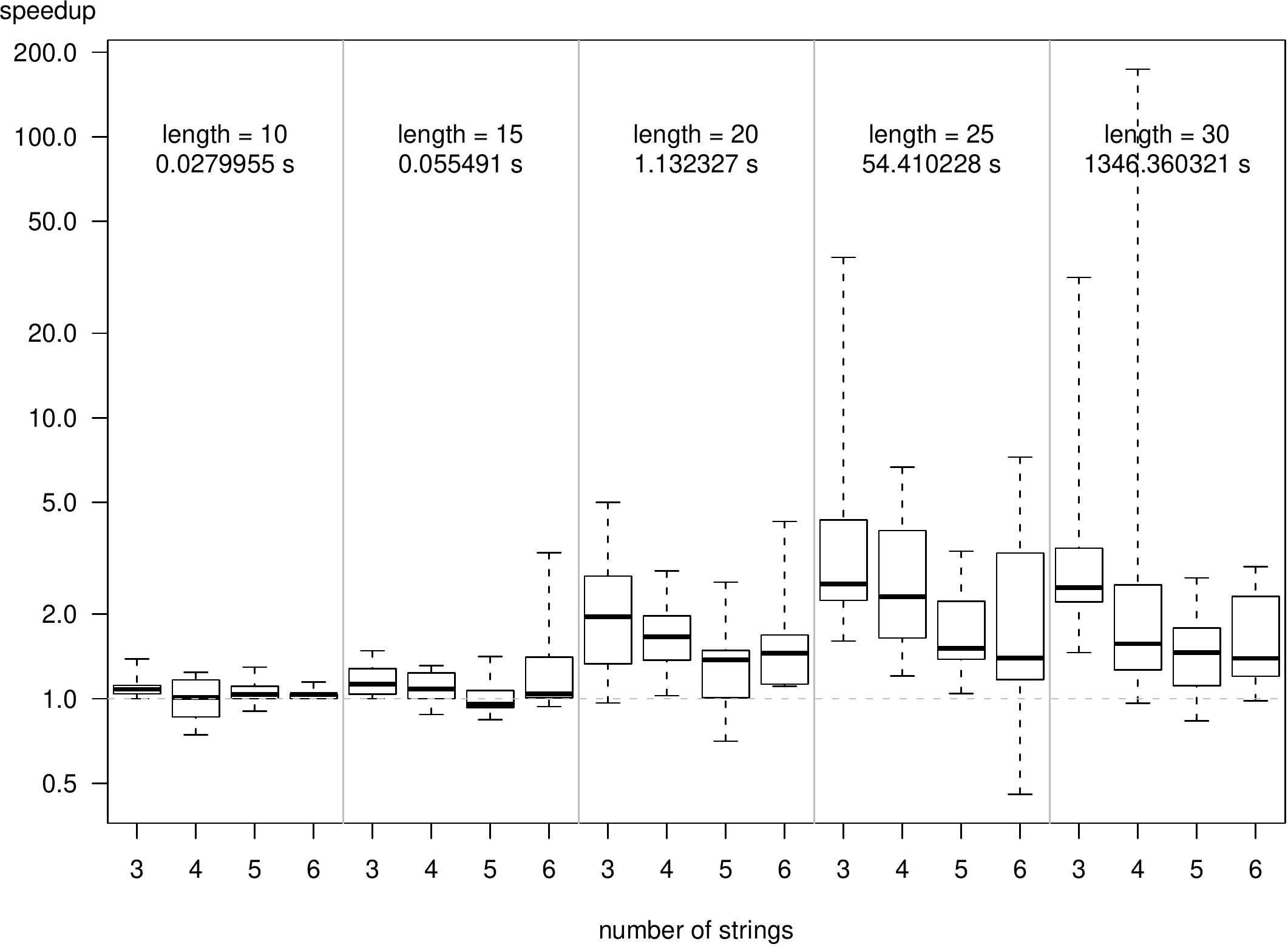}
\includegraphics[width=.8\textwidth]{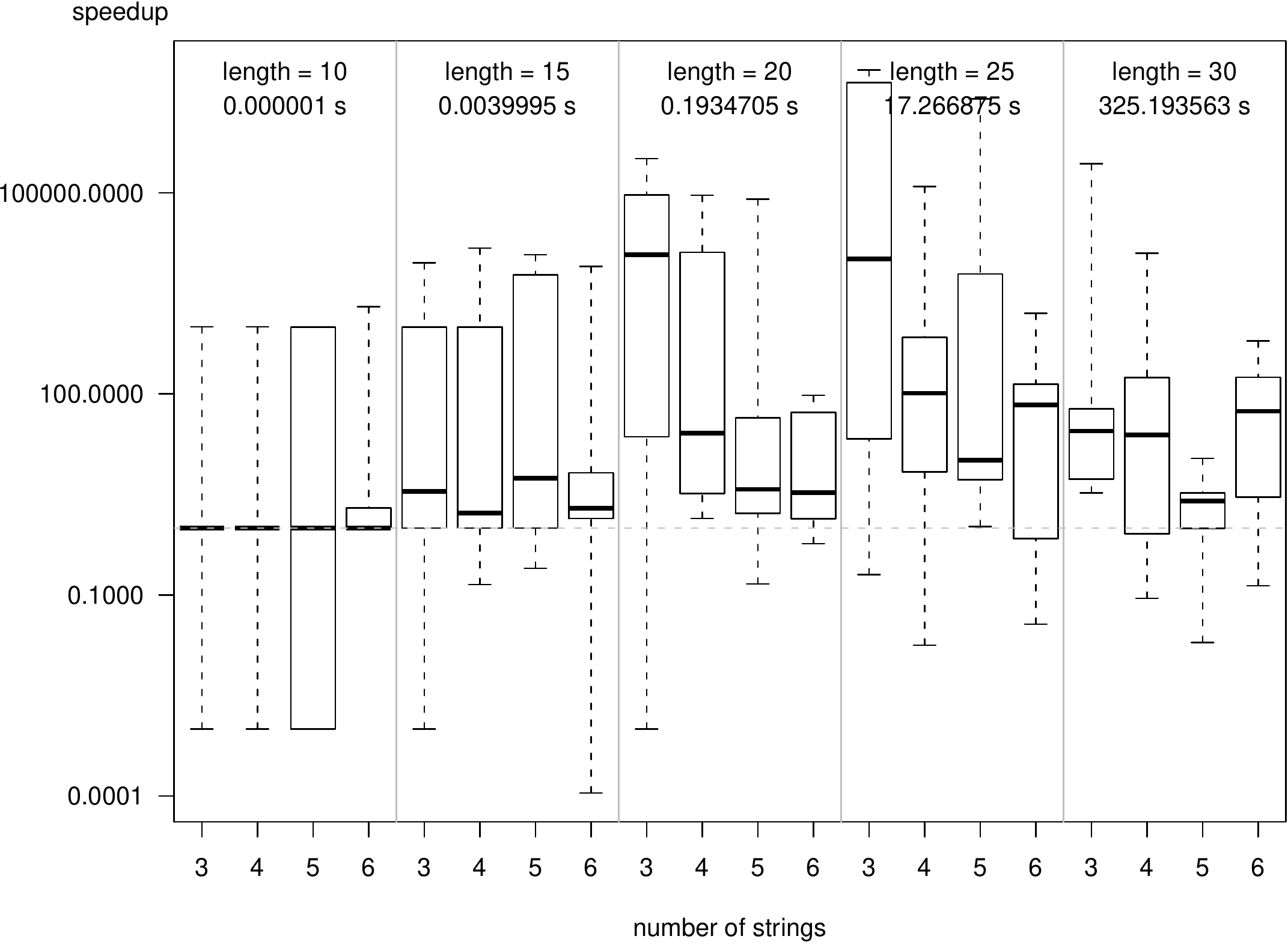}
\caption{Empirical data from 200 instances of 3, 4, 5 and 6 strings of lengths
10, 15, 20, 25 and 30. For each combination of parameters, 10 random instances
were generated with results summarised in the boxes which show median values (thick line), 25th--75th percentiles (boxed) and 0th--100th percentiles (dashed lines).  In the top panel we compare the exact optimal solution times. In the lower panel we show the times taken to obtain an optimal result, omitting the time needed to certificate that result. 
In both figures the y axis shows the speedup
of Position Weight Matrix over Smallest Domain First ordering on a logarithmic scale, and the times given below the string lengths are the median CPU time taken over all strings of that length.   The experiments
were conducted on a dual quad-core 2.66 GHz Intel X-5430 processor with 16 GB of
RAM.}
\label{searchres}
\end{center}
\end{figure}

In this Section we test the hypothesis that PMS-based search heuristics reduce
the search needed for solutions to $\Upsilon(S,d_{min})$ {\CSP} instances when
compared to a standard heuristic. Figure \ref{searchres} illustrates the results
from 200 closest string problems.  Each problem was run first with smallest
domain variable ordering and ascending value ordering (Minion defaults), and
then with PWM-based variable and value ordering as described in Section
\ref{pwm}. 

For exact solutions -- upper panel of Figure \ref{searchres} -- we observe an improvement of PWM over SDF in almost all cases. The
speedup is as high as several orders of magnitude in some cases. The difference
is statistically significant at the 0.001 level. These results
are as expected: the PWM reflects the maximum likelihood of a closest string, so
a search that respects these likelihoods will nearly always be highly efficient,
but will visit very many non-essential nodes  on the few occasions that the
maximum likelihood does not lead to a closest string. A key observation is that the magnitude of speedup increases with increasing string length, which is highly encouraging since the complexity of closest sting is exponential in string length.

If we only consider the search effort needed to find an optimal solution (not taking into account the work needed to provide a certificate of optimality by ruling out closer strings at lower distance) then the speedup of PWM over smallest domain is at the level of orders of magnitude in the general case -- Figure \ref{searchres}, lower panel. This indicates that heuristics are less important when searching exhaustively at a lower than optimal distance: most of the practical complexity of closest string search is associated with providing certificates of optimality, rather than identifying close strings which turn out to be optimal.

\begin{figure}[htb!]
\begin{center}
\includegraphics[width=.6\textwidth]{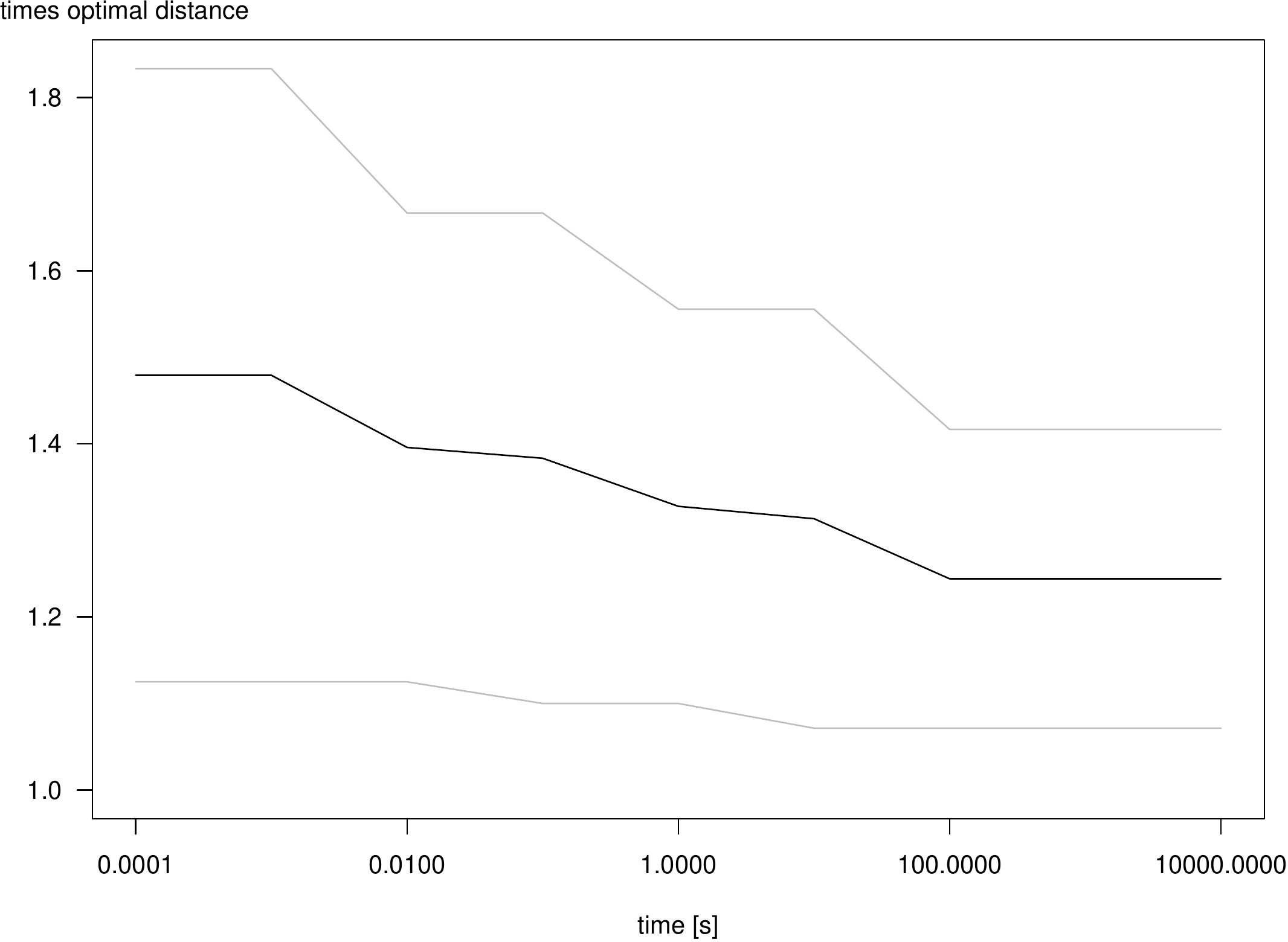}
\caption{Convergence towards optimal Hamming distance. The upper line is the
maximum relative distance, the lower line the minimum and the middle line the mean of the 200 experiments performed. The y axis denotes multiples of the optimal Hamming distance, the x axis denotes CPU time for the PWM heuristic on a logarithmic scale. The experiments
were conducted on a dual quad-core 2.66 GHz Intel X-5430 processor with 8 GB of
RAM.}
\label{convergence}
\end{center}
\end{figure}

Figure~\ref{convergence} shows that we  achieve a good approximation very quickly, in line with existing results that guarantee approximation to four thirds of optimality in polynomial time  \cite{Lanctot2004,Li2002}.  This motivates the hybrid symbolic-numeric methods detailed in Section \ref{hybrid}: practitioners can use {\CSP} to obtain good bounds quickly, then use either numeric methods or AI search methods -- or indeed both using a distributed architecture -- to explore the remaining search space for an exact solution plus certificate of optimality.

Taken together the results indicate that:
\begin{enumerate}
\item  {\CSP} search with PWM variable-value ordering will (in general)  efficiently find candidate solutions to closest string problems with decreasing maximum distance $d$
\item  {\CSP} search with any sensible search ordering can be used to exhaustively rule out the distance below the optimal $d$
\item our empirical evidence is in line with previously reported results: an approximate solution to closest string can be computed in polynomial time, but computation of the necessary certificates of optimality remains intractable in the general case
\item  sequential, single-processor {\CSP} search for problems having more strings of greater length (and possibly a larger alphabet) will become intractable  due to the inherent NP-completeness of closest string.
\end{enumerate}

\begin{figure}[htb!]
	\begin{centering}
	\scalebox{0.6}{\includegraphics{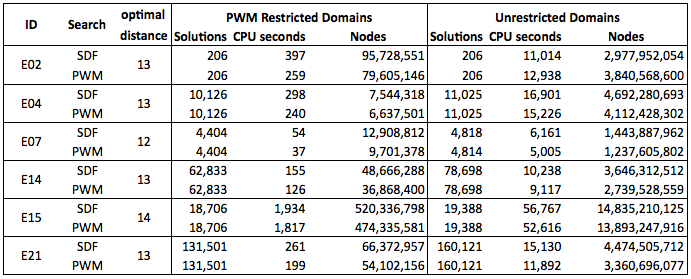}}
	\caption{Empirical data from 6 instances of 5 randomly generated strings of
    length 25. SDF and PWM indicate smallest domain and position weight heuristics respectively. All timings were calculated using a dual quad-core 2.66 GHz Intel X-5430 processor with 16 GB of RAM.}
	\label{searchall}
	\end{centering}
\end{figure}

\subsection{All Closest Strings}
\label{all}

To our knowledge, no study has investigated the problem of finding all closest strings for a given set $S$. This may be due to the additional computational complexity involved: it is hard enough to find single exact closest strings without performing a systematic search for all such strings having the  same maximum distance from $S$. It may also be the case that the problem is not interesting: the important information in a {\csp} solution being the distance returned, with the closest string being merely an exemplar at that distance. It seems likely, however, that knowledge of how much self similarity an input set has -- rather than just the degree of self similarity -- could be useful information  in sequence analysis.

Despite this uncertainty, we wish to investigate the effect that modelling has on the set of all solutions. In our {\CSP} model described in Section \ref{CSPasCSP} we reduce the search space for a first solution by forbidding any variable to take a value that is not present at that position in one of the input strings. Similar restrictions were made by Meneses et al. when formulating {\csp} as an Integer Programming problem \cite{Meneses2004}. The questions are:

\begin{enumerate}
\item Are many otherwise closest strings ruled out by these restrictions?
\item How much extra computational effort is required to identify each and every closest string?
\end{enumerate}

In Figure \ref{searchall} we show the results of sample calculations for six instances of 5 randomly generated strings of length 25 . We see that in all cases (columns headed PWM Restricted Domains)  it is relatively easy using Minion to identify all closest strings if we restrict search to those alphabet symbols that have non-zero values in the position weight matrix for the instance. We also find that search using PMW ordering heuristics performs marginally better than straightforward smallest domain heuristics. 

When the variable-value assignments that have been ruled out because the value does not appear at the variable's position in any element of $S$ are added back to the domains of the search variables, we can perform full search for all closest strings (Figure \ref{searchall}, columns headed by Unrestricted Domains).  The percentage of new closest strings found ranges from 0\% to 22\%, but increase in search required is typically two orders of magnitude.  It should be noted however that:

\begin{enumerate}
\item Minion is searching far fewer than the $4^{25}$ possible search nodes for each instance, the majority being pruned by efficient propagation of the logical consequences of the variable-value assignments implicit  at each node, and
\item Minion is searching 250,000 -- 300,000 nodes per second in addition to the work involved in identifying search sub-trees that need not be explored.
\end{enumerate}

\section{Distributed and Hybrid Computing Strategies}
\label{cloud}

\IncMargin{1em}
\begin{algorithm}[ht]
  \SetKwBlock{Parallel}{do in parallel}{end}
      \SetKwFunction{Solved}{Solved?}
  \SetKwInOut{Input}{Input}\SetKwInOut{Output}{Output}

  \Input{A {\CSP} $\Upsilon$, a cutoff period $T_{max}$ and a branching factor  $K$}
  \Output{Either the first solution, or a guarantee that there are no solutions}
  \BlankLine
  \While {not \Solved{$\Upsilon$}}{
          Send $\Upsilon$ to a node\\
          Run solver with input $\Upsilon$ for $0 \leq t \leq T_{max}$\\
  \lIf{ \Solved{$\Upsilon$}}{\\
  Return solution\\
   }%
 \Else{ 
 $\Upsilon \leftarrow \Upsilon$ with new constraints ruling out search already performed\\
 Split $\Upsilon$ into $K$ subproblems $\Upsilon_1, \Upsilon_2, \ldots, \Upsilon_K$\\
 \Parallel{%
     \For{$1 \leq k \leq K$}{
         Solve($\Upsilon_k$, $T_{max}$,$K$)
        }
  }
}%
 }
  
\caption{A recursive distributed algorithm to solve  any {\CSP} }\label{alg:cloud}
\end{algorithm}\DecMargin{1em}

\subsection{Distributed {\CSP}}

Given the inherently exponential increase in search effort involved in providing
a certificate for an optimal closest string distance by ruling out any closest
strings with with lower distance, the exact solution of large-scale problems is
not expected to be tractable using purely sequential search. In this Section we
describe algorithms that distribute search across multiple compute nodes. These
algorithms will solve closest string problems either on a cluster (a local group
of homogeneous nodes), a grid (a more loosely coupled, heterogeneous and
geographically dispersed set of nodes), or a cloud (a set of an unknown number
of nodes in unknown locations, each having  unknown architecture and resource).
Generally speaking, a cluster is more controllable but smaller than a cloud,
with a grid being either the best or worst of both worlds, depending on one's
point of view. For our purposes we do not require any communication across nodes
(although computational efficiencies could be obtained if that were the case),
and can therefore treat the three distributed paradigms as a single approach.
The only disadvantage to using a cloud is that empirical evaluation is often
impossible since the times reported in virtual machines are not reliable. This
is because clocks of virtual machines can be slowed down or sped up by the VM
management software.  We therefore prototype our computational methodology on a
cluster, and, when satisfied that it is efficient, deploy using a cloud to take
advantage of the very large number of nodes available.  

Algorithm \ref{alg:cloud} gives the basic structure of our distributed search. The predicate {\emph Solved?} returns true whenever the input {\CSP} finds the first solution or finishes searching the entire tree without finding a solution. It returns false if either the computation has timed out, or the node has suddenly stopped working for some reason. 
If all solutions to the input {\CSP} are required, then we modify Algorithm
\ref{alg:cloud} so that all solutions found so far are returned whenever the
{\emph Solved?} predicate fails.  

It should be stressed that Algorithm \ref{alg:cloud} is not a contribution to the results of this paper. The algorithm has been implemented, tested, optimised and deployed on clusters, grids and clouds. It has been -- and is being -- used to attack  {\CSP} instances requiring  an estimated  200 CPU years for exact solution \cite{Distler2010}.

\subsubsection{Distributed  {\CSP}s  Using Minion}
\label{distcsp}

In common with Integer Programming problems,  {\CSP}s distribute naturally
across multiple compute nodes \cite{Etzioni2001}. Significant research has been
invested in the distribution of {\CSP}s across multiple
computers~\cite{collin_feasibility_1991,yokoo_distributed_1998,michel_parallelizing_2007}.
In particular the area of balancing the load among the nodes is an area of
active research~\cite{rolf_load-balancing_2008}.

Instead of the more sophisticated approaches, we choose a simple technique that
does not impose any constraints on the problem to be solved and is targeted
towards very large problems. Our algorithm closely follows
Algorithm~\ref{alg:cloud} -- we run Minion with a time out and when this time
out is reached, we split the remaining search space into two parts. The
subproblems are inserted into a FIFO queue and processed by the computational
nodes, splitting them again if necessary. 


One of the drawbacks of our approach is that it does not parallelise small
problems well. For $n$ compute nodes, we only achieve full capacity utilisation
after $\log_2{n}$ splits, i.e.\ after $\mathtt{timeout}\times\log_2{n}$ seconds.
We do not consider this to be a limiting factor however because the split
timeout can be adapted dynamically to at first quickly split the problem and
when full utilisation has been achieved increase it. For the large problems we
have focused on when implementing this technique, requiring days or even years of CPU time, this is not a limiting factor.

The main advantage of our way of distributing problems over other approaches is
that we explicitly keep the split subproblems in files. This means that at any
point we can stop, suspend, resume, move or cancel the computation and lose a
maximum of $\mathtt{timeout}\times n$ seconds of work, much less in practice.
Apart from contributing to the robustness of the overall system, we can also
easily move subproblems that cannot be solved using the available computational
resources, for example because of memory limitations, to nodes with a higher
specification that are not always available to us.

In the absence of global symmetry breaking constraints that can affect different
parts of the search tree, it is easy to subdivide a typical  {\CSP} into several
non-overlapping sub-problems.  Although there is an inherent latency in sending
problem instances to, and receiving solutions from, either a grid or a cloud,
for large enough problems a speedup linear in the number of compute nodes is
achieved. Recent results using a computational grid indicate that a super-linear
speedup can be achieved using Minion, whenever  a root node consistency check
reduces the search tree \cite{Distler2010}. There is no guarantee of this, however,
since root consistency checks are heuristics that will  at times provide no
benefit for the extra work involved. 

Cloud computing is becoming an important computational paradigm, and the Minion
developers have produced robust, fault-tolerant, methods for distributing Minion
instances across different underlying architectures, including clouds. By
leveraging existing technologies, in particular the Condor distributed computing
framework~\cite{beowulfbook-condor}, we can distribute problems across hundreds
of CPUs and combine cluster, grid and cloud architectures for web-scale
computing. This enables us to tackle problems which have previously been thought
to be unsolvable because of the amount of computation required to find a
solution.

\subsection{Distributed Closest String}
\label{distclosest}

\IncMargin{1em}
\begin{algorithm}[ht]
  \SetKwBlock{Parallel}{do in parallel}{end}
      \SetKwFunction{Solved}{Solved?}\SetKwFunction{RuledOut}{RuledOut?}
  \SetKwInOut{Input}{Input}\SetKwInOut{Output}{Output}

  \Input{$\Upsilon(S,d_{min}, HD(S))$, $T_{max}$ and $K$ }
  \Output{A closest string to $S$ with its maximum Hamming distance  to  $S$}
  \BlankLine
  \For{$0 \leq t \leq T_{max}$} 
           {Run$\Upsilon(S,d_{min},HD(S))$ in Minion \\
           \If{\Solved{$\Upsilon(S,d_{min},HD(S))$}}{Return $CS$ and $d$, and halt all computation\\
           \lElse{ $d_{high} \leftarrow$ the best $d$ found so far\\
            $\Upsilon(S,d_{min},HD(S)) \leftarrow \Upsilon(S,d_{min},d_{high})$ plus constraints ruling out search already performed}
           }
          }
    \Parallel{%
    \For{$d_{min}  \leq  d_{low} < d_{high}$} 
         {DistSolve($\Upsilon^*(S,d_{min},d_{low})$, $T_{max}$, $K$)\\
         \If{\Solved{$\Upsilon^*(S,d_{min},d_{low})$}}{Return $CS$ and $d = d_{low}$, and halt all computation\\
         \lElse{
         Update all (sub-)instances with new lower bound $d_{low} + 1$}
         }
         }
     \For{$2 \leq k \leq K+1$} 
         { DistSolve($\Upsilon_k(S,d_{min},d_{low})$, $T_{max}$, $K$)\\
         \If{\Solved{$\Upsilon_k(S,d_{min},d_{high})$} with $d_{new} < d_{high}$ }{
         \lIf{$d_{new} = d_{high} - 1$}{
           {Return $CS$ and $d = d_{new}$, and halt all computation}\\
           \lElse{Update all (sub-)instances with upper bound $d_{high} = d_{new}$}
           }
         }
         }
     }
         \If{not \Solved{ $\Upsilon_k(S,d_{min},d_{high}) ~\forall ~k$}  $\wedge$ not \Solved{any fixed $d_{low}$ instance}}
              {Return current $d_{high}$ as $d$, and the string found that achieved distance  $d_{high}$ as $CS$ }
           
  \caption{Solve the {\csp} $\Upsilon(S,d_{min},HD(S))$  by distributing search for high and low distances}\label{alg:csp}
\end{algorithm}\DecMargin{1em}

Algorithm \ref{alg:csp} describes our approach to the distributed solution of
closest string problems formalised as Constraint Satisfaction problems. We first
run Minion on the original problem with the PWM ordering heuristic as a single
process. Our empirical evaluation in Section \ref{compare} indicates that nearly
always this process will highly efficiently lower the upper bound for the
problem. Once we have a reasonable upper bound, we start searching for the
optimal distance both above and below. From above, we carry on optimising as
before, but we use the recursive DistSolve algorithm to distribute. From below
we create instances each having a fixed distance, the idea being to exhaustively
rule out any closest strings at these distances. These instances are  run on the
computational nodes at the same time as the optimisation sub-problems. If at any
stage we obtain a candidate closest string at a distance for which all lower
distances have been ruled out, then this is our solution. This can happen both
from above and below.

As mentioned in Section \ref{distcsp}, we expect a super-linear speedup by
performing a root node consistency check for each sub-instance. By keeping track
of the best distance obtained so far during search from above, and of any lower
distances for which no solution has been found, we expect to obtain a further
super-linear speedup in the majority of instances. A large part of the search
tree is pruned by updating all instances (either waiting for input to a
node, or currently being processed by a node) with improved distance bounds as
they become available.

\subsection{Preliminary Evaluation of Distributed Closest String}
\label{evaldist}

For a first evaluation, we ran the algorithm on 6 random strings of lengths 25,
26, 27, 28, 29 and 30. We chose a time limit of 1 hour to reduce communication
overheads. The problems with strings of length 25, 26 and 27 were solved
to completion within this limit.

The remaining three instances were split after one hour and distributed across
multiple machines. As suggested by Figure~\ref{convergence}, the solutions
converged towards the optimal distance extremely quickly. For only one of the
instances was a better Hamming distance  found in one of the sub-instances. The
remaining sub-instances proved the optimality of the previously found
solution.

These tests demonstrate the practical applicability  of our distributed approach. We have not performed a large-scale evaluation, nor have we obtained evidence for the super-linear speedups associated with bounds updates and an increased number of consistency checks at the root of sub-instances.  Our experience with the distributed solution of other classes of {\CSP} suggests that our system will scale seamlessly to grids or clouds containing an essentially unlimited number of compute nodes: there is no communication across nodes, a node failure can be recovered from with no extra search needed (the search tree already explored is reported whenever search is interrupted for any reason), and the order in which sub-instances are solved can be tuned.

\subsection{Hybrid Methods}
\label{hybrid}

\IncMargin{1em}
\begin{algorithm}[ht]
      \SetKwFunction{Solved}{Solved?}
  \SetKwInOut{Input}{Input}\SetKwInOut{Output}{Output}

  \Input{$\Upsilon^0(S,d_{min},HD(S))$\\  
    $TOL$,  a limit for the gap between the highest and lowest computed distances }
  \Output{A closest string to $S$ with its maximum Hamming distance  to  $S$ }
  \BlankLine
  \emph{Seek closer distance bounds for  $\Upsilon^0(S,d_{min},HD(S))$  using {\CSP} alone}\;
  \While{$|d_{high} - d_{low}| < TOL$} 
           {Run Algorithm \ref{alg:csp} on $\Upsilon^0(S,d_{min},HD(S))$\\
           Output $d_{low}$ and $d_{high}$ when updated }
     \emph{Once bounds are close enough, send to numeric IP or linear time search}\;
    \If{$ TOL > 1 \wedge |d_{high} - d_{low}| \leq TOL$} 
         {Formulate the remaining problem as an Integer Programming problem\\
         Search for solution using numeric branch and bound \\
         }
     \If{$ |d_{high} - d_{low}| = TOL = 1$} 
      {Formulate the remaining problem as a fixed $d$ instance\\
          Search for solution using linear time methods \\
         }  
\caption{Solve the {\csp} $\Upsilon(S,d_{min},HD(S))$ using hybrid {\CSP} and numerical methods}\label{alg:hybrid}
\end{algorithm}\DecMargin{1em}

 The empirical results obtained so far suggest that {\CSP} formulation with PWM ordering is an effective approach for ruling out high distances: Minion will often find a first solution very quickly, given the search space involved. However, at least for the approach suggested in this paper, {\CSP} formulation requires much more time to provide a certificate that an optimal solution is indeed optimal. As discussed in Section \ref{complexity}, efficient methods have been reported in the literature for when the upper and lower distance bounds are close \cite{Meneses2004}, and for problems where the distance is fixed \cite{Gramm2001}.  In this Section we propose a hybrid approach that aims to take advantage of the best methods available. Algorithm \ref{alg:hybrid} takes a closest string instance and partially solves it using Algorithm \ref{alg:cloud}. If the upper and lower bounds come to within a pre-defined tolerance, then numeric branch and bound methods are used to solve an Integer Programming formulation of the problem not yet solved by Minion. If the distance under consideration  ever becomes fixed, then the linear time methods set out in \cite{Gramm2001} can be applied. 
 
 It should be stressed that these three methods ({\CSP}, IP branch and bound, and linear time) need not be exclusive: once tolerance achieving bounds are found by Minion, the distributed Minion search can continue, and the {\CSP} and numeric methods are then competing to find the first solution. This of course pre-supposes that computational resource is not a problem, but that is why we are using web-scale facilities in the first place.

\section{Discussion}
\label{discussion}

We have performed (to our knowledge) the first evaluation of Constraint Satisfaction as a computational framework for solving closest string problems. We have shown that careful consideration of symbol occurrences can provide search heuristics that give, in general, several orders of magnitude speedup when computing approximate solutions. We have also shown that {\CSP} is less effective when searching for certificates of distance optimality. This result motivated our detailed description of algorithms for web-scale distributed {\CSP} computation, and also our design of hybrid distributed algorithms that can take advantage of the strengths of both numeric and {\CSP} computational techniques. 

We have also performed (to our knowledge) the first analysis  of the computational difficulties involved in the identification of all closest strings for a given input set, irrespective of the computational framework used.
Our results for all closest strings motivate the question of which definition of self-similarity is suitable for the computational biology setting. In terms of information theory the all closest strings problem can not exclude alphabet symbols and still be correct. However, when seeking to quantify the self-similarity of DNA sequences it may be perfectly justifiable to exclude closest strings that can have no symbol in common with the sequences in question at a given point. If this were to be the case, then the computational efficiency of the search for all closest strings would be greatly increased (Figure \ref{searchall}).

We have designed, implemented and deployed a computational methodology for distributed search for closest string solutions. This contribution provides a practically useful means of attacking the NP-complete instances by division into smaller sub-problems. Our system is guaranteed never to perform the same search twice, will recover seamlessly from any unforseen loss of compute nodes, and is extendable to web-scale clouds. 

The limitations of this study are that we have not been able to compare numeric solutions to {\CSP} solutions directly, (nor assess the hybrid numeric-{\CSP} algorithm described in the paper), and that we have not attacked real world problems in a distributed setting, instead solving randomly-generated instances. We have outlined the possibility of super-linear speedups for distributed search, but present no supporting evidence as our distributed implementation has as yet been tested solely for accuracy, scalability and robustness.

\subsection{Future Work}
\label{future}

Possible future avenues of research include

\begin{itemize}
\item Performing full-scale cloud searches for solutions to real-world closest string problems (rather than concentrating on randomly-generated problem instances as for this paper)
\item  The provision of a fully distributed IP branch and bound solver for use in Algorithm \ref{alg:hybrid}
\item  Experimentation with the directed graph {\csp} formulation described by Meneses et al. \cite{Meneses2004} to improve their IP formulation as an alternative {\CSP} model for closest strings
\item Investigating other NP-hard string sequence problems such as closest substring, farthest string and $n$-mismatch -- an obvious candidate is consensus string, which differs from closest string only in that the objective is to minimise the sum of the distances, rather than minimise the maximum individual distance
\item Search for more complicated metrics than Hamming distance that better capture the concepts of difference and similarity for nucleotide sequences -- recent results involving Markov models \cite{Viswanath2009} suggest that judicious choice of metric has profound implications for both the theory and practice of identifying self-similarity amongst sequences. 
\end{itemize}

\subsection{Acknowledgments}
\label{acks}

The authors would like to acknowledge the critical discussions had with Prof.
Ian Gent and Dr. Ian Miguel, and with various attendees at the International
Society for  Computational Biology's Latin America meeting held in Montevideo,
Uruguay in March 2010. Tom Kelsey is supported by UK Engineering and Physical Sciences Research Council (EPSRC)  grants EP/CS23229/1
and EP/H004092/1. Lars Kotthoff is supported by a Scottish Informatics and Computer Science Alliance (SICSA) studentship. The funders
had no role in study design, data collection and analysis, decision to publish,
or preparation of the manuscript.

%
\label{sect:bib}
\bibliographystyle{plain}
\bibliography{K-ANB2010}

\end{document}